\def\eqref#1{equation~\ref{#1}}
\def\1{\bm{1}}
\DeclareMathAlphabet{\mathsfit}{\encodingdefault}{\sfdefault}{m}{sl}
\SetMathAlphabet{\mathsfit}{bold}{\encodingdefault}{\sfdefault}{bx}{n}
\newcommand{\Var}{\mathrm{Var}}
\DeclareMathOperator*{\argmax}{arg\,max}
\definecolor{tab_green}{RGB}{44,160,44}
\definecolor{tab_red}{RGB}{214,39,40}
\definecolor{tab_orange}{RGB}{255,127,14}
\definecolor{tab_gray}{RGB}{105,105,105}
\title{Scalable Exploration for High-Dimensional Continuous Control via Value-Guided Flow}
\author{\hspace{75pt}Yunyue Wei\qquad\qquad Chenhui Zuo\qquad\qquad Yanan Sui\\
\AND
\vspace{-20pt}
\\
\hspace{150pt} Tsinghua University\\
\AND
\vspace{-30pt}
\\
\hspace{110pt}\texttt{yunyuewei@mail.tsinghua.edu.cn}\\
\hspace{110pt} \texttt{zuoch22@mails.tsinghua.edu.cn}\\
\hspace{135pt}\texttt{ysui@tsinghua.edu.cn}
}
\newcommand{\algo}{{\sc\textsf{Qflex}}}
\newcommand{\norm}[1]{\left\lVert#1\right\rVert}
\theoremstyle{plain}
\newtheorem{proposition}{Proposition}
\theoremstyle{definition}
\theoremstyle{remark}
\begin{document}

\maketitle

\begin{abstract}
Controlling high-dimensional systems in biological and robotic applications is challenging due to expansive state–action spaces, where effective exploration is critical. Commonly used exploration strategies in reinforcement learning are largely undirected with sharp degradation as action dimensionality grows. Many existing methods resort to dimensionality reduction, which constrains policy expressiveness and forfeits system flexibility.
We introduce Q-guided Flow Exploration (\algo), a scalable reinforcement learning method that conducts exploration directly in the native high-dimensional action space. During training, \algo~traverses actions from a learnable source distribution along a probability flow induced by the learned value function, aligning exploration with task-relevant gradients rather than isotropic noise. 
Our proposed method substantially outperforms representative online reinforcement learning baselines across diverse high-dimensional continuous-control benchmarks. \algo~also successfully controls a full-body human musculoskeletal model to perform agile, complex movements, demonstrating superior scalability and sample efficiency in very high-dimensional settings. Our results indicate that value-guided flows offer a principled and practical route to exploration at scale.

\end{abstract}

\section{Introduction}

Controlling over high-dimensional dynamical systems underpins a broad range of applications in robotics, sports, and embodied intelligence from legged locomotion to full-body musculoskeletal control. Complex sensorimotor coordination and over-actuation are common in such systems. As the number of sensors and actuators grows, these systems gain flexibility and robustness, enabling agile and precise movements. However, increasing dimensionality also amplifies the challenges of coordination and learning efficiency due to rapidly expanding state–action spaces, making effective exploration strategies essential.

A widespread practice in online deep reinforcement learning (RL) is to inject \emph{undirected} stochasticity (e.g., Gaussian noise) into policy outputs for exploration \citep{haarnoja2018soft}. While simple and effective in moderate dimensions, such isotropic perturbations rapidly lose coverage and become sample-inefficient as action dimensionality and actuator redundancy grow, yielding vanishing signal for discovering task-relevant actions. Dimensionality reduction-based learning constrains control within low-rank subspaces to make search tractable \citep{berg2024sar}. Such complementary strategy may forfeit the flexibility and redundancy that high-dimensional dynamical systems are designed to provide.

Iterated sampling techniques, most notably diffusion models and flow-based transports, have achieved striking success in high-dimensional generative modeling, providing robust procedures for sampling in thousands of dimensions \citep{song2020score,lipman2022flow}. Motivated by these advances, several works have adapted iterated-sampling ideas to control and decision making \citep{janner2022planning, yang2023policy}. Despite promising results in moderate-dimensional settings, these methods have not demonstrated success for high-dimensional continuous control with substantial over-actuation.

In this paper, we introduce Q-guided Flow Exploration (\algo), a scalable exploration mechanism that operates in the \emph{native} high-dimensional action space. 
\algo~achieves directed exploration for policy improvement by sampling from probability flow induced by learned state–action value function $Q$, proposing informative actions aligned with task-relevant direction. Our design preserves flexibility of the complex systems, and integrates cleanly into a actor-critic loop, yielding efficient learning across diverse high-dimensional continuous-control benchmarks. \algo~also succeeds in controlling a full-body musculoskeletal system to perform complex movements, highlighting its scalable and efficient exploration.

\textbf{Our contributions}: (1) We propose \algo, a scalable RL method which achieves value-aligned directed exploration with policy-improvement validity, enabling efficient learning over high-dimensional state-action spaces. (2) We present an actor–critic implementation of \algo~that consistently outperforms representative Gaussian-based and diffusion-based RL baselines on a wide range of high-dimensional continuous-control benchmarks. (3) We demonstrate \algo~on a full-body human musculoskeletal model with 700 actuators, achieving agile, complex movements and efficient exploration without dimension reduction.
\vspace{-5pt}

\section{Related Work}

\textbf{High-dimensional over-actuated control.} The control of high-dimensional dynamical system is challenging due to its high-dimensionality and over-actuation. With few model-based strategies \citep{hansen2023td, iclr2025mpc2}, model-free deep reinforcement learning is the mainstream solution for solving complex control tasks \citep{kidzinski2018learning, geiss2024exciting, caggiano2024myochallenge}. Hierarchical RL decomposes decision making into high-level planning and low-level control, reducing exploration burden by restricting search to joint- or skill-level choices \citep{lee2019scalable,park2022generative,feng2023musclevae}. Curriculum-based learning iterates over sub-tasks to smooth the learning curve for diverse skill learning over high-dimensional embodiment \citep{caggiano2023myodex, park2025magnet}.
DEP-RL employs bio-inspired sampling for coordinated exploration \citep{schumacher2022dep, schumacher2023natural}.
Lattice generated correlated noise for exploration by injecting stochasticity into latent embeddings of the policy network \citep{chiappa2024latent, simos2025reinforcement}.
Synergy-based approaches such as DynSyn \citep{he2024dynsyn} learn or impose low-dimensional control subspaces derived from morphology or task structure, enabling more stable training on systems with high degrees of freedom.
These methods primarily mitigate undirected exploration issue by explicit or implicit dimensionality reduction, 
which can constrain policy expressiveness and underutilize redundancy—potentially limiting the flexibility required for agile, task-diverse movements. 
\vspace{-5pt}



\textbf{Iterated sampling-based online reinforcement learning.} Inspired by early successes of iterated sampling for offline and imitation decision making \citep{janner2022planning,chi2023diffusion}, many works have adapted diffusion-based policy parameterization to online reinforcement learning to encourage diverse action distribution \citep{yang2023policy, li2024learning, ishfaq2025langevin, celik2025dime}. 
DACER introduces a diffusion actor–critic with an entropy regulator to stabilize policy learning and maintain exploration \citep{wang2024diffusion}. Given unknown target distributions, several studies utilize the learned value function to regularize the policy learning \citep{ding2024diffusion, dong2025maximum, jain2025sampling}. QSM matches the score of diffusion policy with the gradient of the Q-function \citep{psenka2023learning}. SDAC introduces a Q-reweighted score matching function to avoid unstable training of backpropagating gradients through the diffusion chain \citep{ma2025soft}. Recent works also employ flow-based policy in KL-constrained policy optimization \citep{lv2025flow, mcallister2025flow}. 
 These methods typically use standard Gaussian as a general initial distribution for the primary goal of multi-modal policy learning.
The uninformative, isotropic bases can hinder scalability of policy learning in high-dimensional continuous control.




\section{Preliminaries}

\subsection{High-dimensional continuous control}

In this paper, we formalize the control of high-dimensional dynamical system as a infinite horizon Markov decision process (MDP) defined by the tuple $\mathcal{M} \coloneqq \{\mathcal{S}, \mathcal{A}, \gamma, f, r, \rho\}$, where $\mathcal{S} \subset \mathbb{R}^{|\mathcal{S}|}$ is the state space, $\mathcal{A} \subset [-1, 1]^{|\mathcal{A}|}$ is the action space, $\gamma$ is the discount factor, $f\coloneqq \mathcal{S}\times \mathcal{A} \rightarrow \mathcal{P} (\mathcal{S})$ is the transition probability to $\boldsymbol{s}'\in\mathcal{S}$ when being in $\boldsymbol{s}\in\mathcal{S}$ and executing $\boldsymbol{a}\in\mathcal{A}$, $r\coloneqq \mathcal{S}\times \mathcal{A}\rightarrow \mathbb{R}$ is the reward function, and $\rho\coloneqq \mathcal{S}\rightarrow  \mathcal{P} (\mathcal{S})$ is the initial state distribution. The MDP starts from an initial state $\boldsymbol{s}_0$ sampled from $\rho$, and proceeds with actions sampled from a policy $\pi \coloneqq \mathcal{S} \rightarrow  \mathcal{P} (\mathcal{A})$. 

Our goal is to optimize the policy parameters to maximize the discounted cumulative reward:
\begin{align}
    J(\pi) = \mathbb{E}_{\pi}\sum_{h=0}^{\infty} \gamma^h r(\boldsymbol{s}_h, \boldsymbol{a}_h).
\end{align}
Compared with low-dimensional dynamical systems, controlling high-dimensional dynamics is substantially more challenging. We use the human musculoskeletal system as a motivating example to illustrate the difficulties of high-dimensional continuous control.

\textbf{High dimensionality.} Full-body human locomotion integrates rich sensory feedback with more than 600 muscles. Unlike robotic arms or quadrupeds, which typically operate within state and action spaces on the order of tens, the human musculoskeletal system features state and action spaces that are orders of magnitude larger. The size of the state-action space grows rapidly with dimension, leading to pronounced “curse-of-dimensionality” effects \citep{koppen2000curse}. This demands expressive models and substantial informative data to reliably map high-dimensional states and actions to control performance.

\textbf{Over-actuation.} The number of biological actuators far exceeds the system’s degrees of freedom (DoFs): many joints can be actuated by multiple muscles, and identical joint torques can arise from numerous activation patterns. This redundancy enlarges the feasible action set and complicates exploration and credit assignment, as multiple action sequences can yield indistinguishable kinematics but different internal forces and costs \citep{valero2015exploring}.

\subsection{Actor-critic online reinforcement learning}

 Online reinforcement learning typically employs actor-critic framework, where the Q-function $Q^{\pi}(\boldsymbol{s}, \boldsymbol{a})$ represents the value of state-action pair $(\boldsymbol{s}, \boldsymbol{a})$ under policy $\pi$:
\begin{align}
    Q^{\pi}(\boldsymbol{s}, \boldsymbol{a}) = \mathbb{E}_{\pi}\left[ \sum_{h=0}^\infty\gamma^h (r(\boldsymbol{s}_h, \boldsymbol{a}_h) \bigg| \boldsymbol{s}_0=\boldsymbol{s}, \boldsymbol{a}_0=\boldsymbol{a} \right],
\end{align}
The value function and the policy can be iteratively learned via a two-step scheme: policy evaluation and policy improvement. During policy evaluation, the Q-function is updated by Bellman equation operator $\mathcal{T}^{\pi}$ from any function $Q$, which converges to $Q^\pi$ when the operation number goes to infinity:
\begin{align}
    \mathcal{T}^{\pi}Q(\boldsymbol{s}_h, \boldsymbol{a}_h) \triangleq r(\boldsymbol{s}_h, \boldsymbol{a}_h) + \gamma \mathbb{E}_{\boldsymbol{a}_{h+1}\sim\pi}\left[ Q(\boldsymbol{s}_{h+1}, \boldsymbol{a}_{h+1})\right],
    \label{eq:policy_eval}
\end{align}
where the transition data tuples $(\boldsymbol{s}_h, \boldsymbol{a}_h, r(\boldsymbol{s}_h, \boldsymbol{a}_h), \boldsymbol{s}_{h+1})$ are collected by interacting with the environment and stored in the replay buffer $\mathcal{B}$. To enhance the stability of Q-function update, two or more Q-functions are learned with separate parameters, where we use the minimum estimation to compute the regression target. In high-dimensional continuous control setting, the transition tuples are often collected from large number of parallel environments for better time efficiency.

In policy improvement, the policy parameters $\theta$ can be updated via optimizing the Q-function:
\begin{align}
    \pi_{\text{new}} = \argmax_{\pi} \mathbb{E}_{\boldsymbol{s}, \boldsymbol{a}\sim\pi} Q^{\pi_{\text{old}}}(\boldsymbol{s}, \boldsymbol{a}). \label{eq:policy_imp}
\end{align}
In practice, the Q-function and the policy are typically parameterized by neural network $Q_{\phi}$ and $\pi_{\theta}$, and optimized by minimizing the following loss functions with gradient descent:
\begin{align}
   \label{eq:q_loss} \mathcal{L}_Q(\phi) &= \mathbb{E}_{(\boldsymbol{s}, \boldsymbol{a}, \boldsymbol{s}')\sim \mathcal{B}} \left[ (r(\boldsymbol{s}, \boldsymbol{a}) + \gamma \mathbb{E}_{\boldsymbol{s}'\sim f, \boldsymbol{a}'\sim\pi} [Q_{\phi}(\boldsymbol{s}', \boldsymbol{a}')] - Q_{\phi}(\boldsymbol{s}, \boldsymbol{a}))^2 \right],\\
    \label{eq:pi_loss}\mathcal{L}_\pi(\theta) &= \mathbb{E}_{\boldsymbol{s}, \boldsymbol{a}\sim \pi_\theta} [-Q_{\phi}(\boldsymbol{s}, \boldsymbol{a})].
\end{align}

\subsection{Flow matching}

Flow matching is a simulation-free method for generative modeling that learns a probability flow directly by matching velocity fields along continuous-time probability paths. Let
$p^{(0)}(\boldsymbol{x}_0)$ and $p^{(1)}(\boldsymbol{x}_1)$ denote the source and target distributions over $\mathbb{R}^d$ respectively. Flow matching considers a continuous-time probability path $\{p^{(t)}(\boldsymbol{x}) \}_{t\in[0, 1]}$ that evolves smoothly from $p^{(0)}(\boldsymbol{x}_0)$ to $p^{(1)}(\boldsymbol{x}_1)$. This evolution is governed by a velocity field $\boldsymbol{v}^{(t)}(\boldsymbol{x})$, such that the density $p^{(t)}$ satisfies the continuity equation:
\begin{align}
    \frac{d p^{(t)}(\boldsymbol{x})}{d t} + \nabla \cdot [p^{(t)}(\boldsymbol{x})\boldsymbol{v}^{(t)}(\boldsymbol{x}))] = 0.
\end{align}

Assuming the path $p^{(t)}$ is regular enough, one can define a flow map $\boldsymbol{\phi}^{(t)}$ through the following Ordinary Differential Equation (ODE):
\begin{align}
    \frac{d \boldsymbol{\phi}^{(t)}(\boldsymbol{x})}{dt} = \boldsymbol{v}^{(t)}(\boldsymbol{\phi}^{(t)}(\boldsymbol{x})), \quad \boldsymbol{\phi}^{(0)}(\boldsymbol{x}) = \boldsymbol{x}.
\end{align}
We denote $\boldsymbol{\phi}^{(t)}(\boldsymbol{x})$ as $\boldsymbol{x}^{(t)}$. Flow matching seeks to learn an approximate velocity field $\boldsymbol{v}_{w}(\boldsymbol{x}, t)$, parameterized by a neural network, that induces a flow transporting $p_0$ to $p_1$. To train $\boldsymbol{v}_{w}$, flow matching minimizes the expected squared error between the model and a reference (or target) velocity field, which is tractable when conditioned on samples from the target distribution \citep{lipman2022flow}:
\begin{align}
    \mathcal{L}_{\text{CFM}} =  \mathbb{E}_{\substack{
    t \sim \mathcal{U}([0, 1])\\ 
    \boldsymbol{x}^{(1)}\sim p^{(1)}(\cdot) \\
    \boldsymbol{x}^{(t)} \sim  p^{(t)}(\cdot|\boldsymbol{x}^{(1)}) 
    }} \norm{\boldsymbol{v}_{w}(\boldsymbol{x}^{(t)}, t) - \boldsymbol{v}^{(t)}(\boldsymbol{x}^{(t)} | \boldsymbol{x}^{(1)})}^2
\end{align}

\begin{figure*}[t]
  \centering
  \includegraphics[width=1\linewidth]{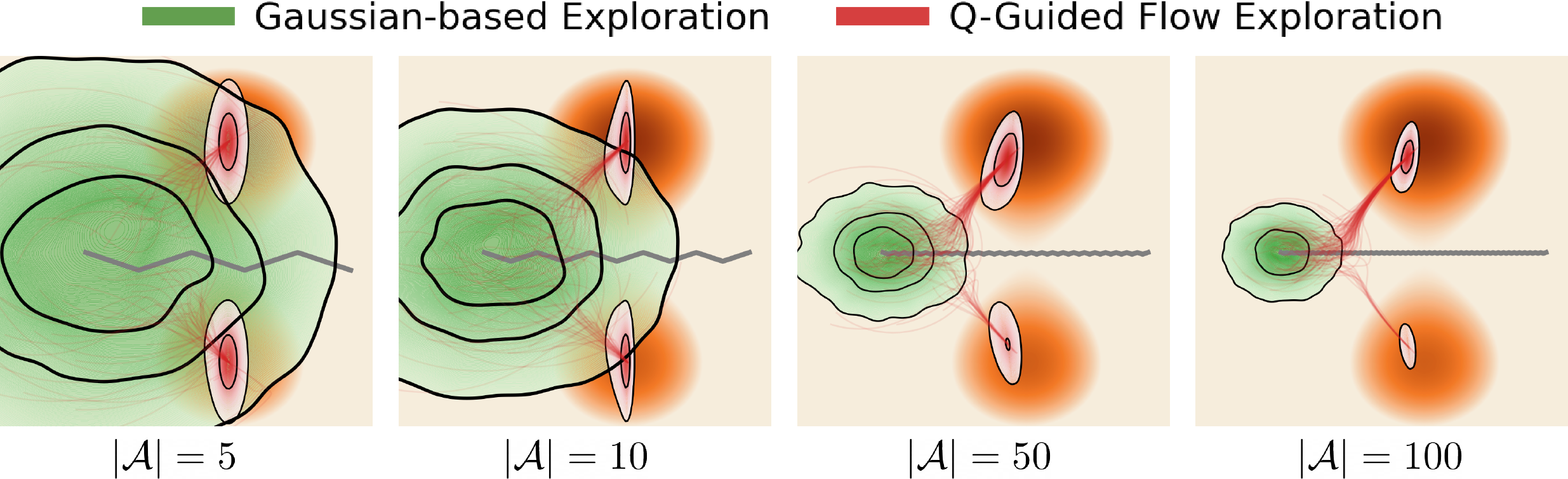}
  \caption{\textbf{Exploration behavior across increasing action dimensionality.} 
  The {\color{tab_gray}gray} polyline depicts a planar kinematic chain with $|\mathcal{A}|$ degrees of freedom.
The {\color{tab_orange}orange} background (darker is higher) visualizes the state–action value $Q$.
{\color{tab_green}Green} contours show the end-effector distribution induced by an undirected Gaussian proposal over joint angles, whose exploratory reach collapses as $|\mathcal{A}|$ increases.
{\color{tab_red}Red} streamlines/contours depict Q-guided probability flows that transport probability mass from the Gaussian proposal toward high-value modes, sustaining directed exploration in high dimensions.
  }
  \label{fig:q_illu}
\end{figure*}

\section{Vanishing effectiveness of Undirected Exploration}

To enable exploration during interaction with the environment, the policy $\pi$ is often designed to be a stochastic distribution \citep{haarnoja2018soft} or perturbed with undirected, isotropic noises \citep{schulman2017proximal}. Gaussian distribution is a commonly used choice for policy and noise parameterization for its simplicity and tractable likelihood computation. 
The policy that uses Gaussian for exploration follows the generalized form as:
\begin{align}
\pi(\boldsymbol{a}|\boldsymbol{s}) = \mathcal{N}(\mu(\boldsymbol{s}), \sigma^2(\boldsymbol{s})),
\end{align}
where $\mu\coloneqq \mathbb{R}^{|\mathcal{S}|}\rightarrow \mathbb{R}^{|\mathcal{A}|}$ and $\sigma \coloneqq \mathbb{R}^{|\mathcal{S}|}\rightarrow \mathbb{R}^{|\mathcal{A}|} $ are mean and (typically) diagonal standard deviation functions of each Gaussian action distribution. In this section, we demonstrate that the undirected stochasticity leads to vanishing exploration in high-dimensional continuous control by the following case analysis:

\textbf{Case analysis: vanishing exploration in high DoF settings.} Consider a planar kinematic chain with $|\mathcal{A}|$ degrees-of-freedom (i.e. $|\mathcal{A}|$ revolute joints and a terminal link) in 2D, where each link has length $l_i = L/|\mathcal{A}|$. Under i.i.d. zero-mean joint-angle perturbations with fixed variance, the end-effector position variance scales as $O(\frac{1}{|\mathcal{A}|})$; equivalently, it decays proportionally to $\frac{1}{|\mathcal{A}|}$ as $|\mathcal{A}|$ grows. (See \textbf{Appendix \ref{sec:proof_example}} for the proof.)


We visualize the vanishing exploration in Figure \ref{fig:q_illu}. When the action dimension is moderate ($\mathcal{A}\le10$), Gaussian-based exploration suffices to find informative samplings with high values. However, the diversity of the Gaussian-based exploration collapses as the system complexity grows, leading to uninformative sampling behavior. Related observations of vanishing exploration for over-actuated systems have also been reported in previous works \citep{schumacher2022dep}. These findings motivate directed exploration mechanisms rather than relying on isotropic perturbations.








\begin{algorithm}[H] 
	\caption{Q-guided Flow Exploration (\algo)}
	\begin{algorithmic}[1]
		\renewcommand{\algorithmicrequire}{\textbf{Input:}}
		\renewcommand{\algorithmicensure}{\textbf{Output:}}
		\REQUIRE Initialized parameters $\theta, w, \phi$, gradient step number $N$, initial gradient step size $\eta$, learning rates $\lambda_\phi, \lambda_\theta, \lambda_w$
		\FOR{$h = 1, 2, \cdots$}
        \STATE $\boldsymbol{a}_h \sim \pi_{\theta, w}^{(1)}(\cdot|\boldsymbol{s}_h)$
            \STATE $\boldsymbol{s}_{h+1} \sim p(\cdot|\boldsymbol{a}_t^0,\boldsymbol{s}_h)$
            \STATE $\mathcal{B} \leftarrow \mathcal{B} \bigcup \{\boldsymbol{s}_h,\boldsymbol{a}_h,r_h,\boldsymbol{s}_{h+1} \}$\\
            \STATE $\phi \leftarrow \phi - \lambda_\phi\nabla_{\phi}\mathcal{L}_Q(\phi)$ 
            \COMMENT{ \text{Eq.~(\ref{eq:q_loss})}} \\
             \STATE $\theta \leftarrow \theta - \lambda_\theta \nabla_{\theta}\mathcal{L}_\pi(\theta)$ \COMMENT{ \text{Eq.~(\ref{eq:pi_loss})}} \\
             \STATE $\boldsymbol{a}^{(0)} \sim \pi^{(0)}_{\theta}(\cdot|\boldsymbol{s}_h)$
             \FOR{$n = 1$ to $N$}
                \STATE $\boldsymbol{a}^{(\frac{n}{N})} \leftarrow \boldsymbol{a}^{(\frac{n-1}{N})} + \bar{\eta} \nabla_{\boldsymbol{a}} Q_{\phi}(\boldsymbol{s}_h, \boldsymbol{a}^{(\frac{n-1}{N})})$\COMMENT{\text{Q-guided flow construction}}\\
             \ENDFOR
            \STATE $w \leftarrow w - \lambda_w \nabla_{w}\mathcal{L}_{\boldsymbol{v}}(w)$\COMMENT{ \text{Eq.~(\ref{eq:v_loss})}}
		\ENDFOR
	\end{algorithmic}
	\label{algo:fox}
\end{algorithm}

\section{Flow-based Policy for Scalable Exploration}



In this section, we first demonstrate how policy improvement can be achieved by sampling from a probability flow guided by the learned state-action value function. Then we introduce \algo~, an efficient online RL method for scalable exploration in high-dimensional continuous control. 

\subsection{Policy improvement via value-guided flow}

Since undirected exploration vanishes in high-dimensional action space, a suitable strategy to explore is to take the ``best" action under current experience.
Given policy $\pi_{\text{old}}$ and the Q-function $Q^{\pi_{\text{old}}}$, the policy improvement procedure in Eq.~(\ref{eq:policy_imp}) seeks a new policy $\pi_{\text{new}}$ that maximizes the expectation of $Q^{\pi_{\text{old}}}$. We denote the policy learned by minimizing Eq.~(\ref{eq:pi_loss}) as $\pi^{(0)}$. In practice, $\pi^{(0)}$ often deviates from $\pi_{\text{new}}$ due to factors such as restrictive parameterization or insufficient optimization. To bridge this gap, we construct a Q-guided velocity field that transports $\pi^{(0)}$ towards $\pi_{\text{new}}$:
\begin{align}
    \label{eq:q_flow}\frac{d \boldsymbol{a}^{(t)}}{dt}= \boldsymbol{v}^{(t)}_Q(\boldsymbol{a}^{(t)}; \boldsymbol{s}) =\boldsymbol{M}\nabla_{\boldsymbol{a}} Q^{\pi_{\text{old}}}(\boldsymbol{s}, \boldsymbol{a}), \quad \ \boldsymbol{a}^{(0)} \sim \pi^{(0)} (\cdot | \boldsymbol{s}),
\end{align}
where $\boldsymbol{M}$ is any positive definite preconditioner that that rescales and reorients the raw action-gradient. Defining the advantage of 
$\pi^{(t)}$ over $\pi^{(0)}$ as:
\begin{align}
    F(t;\boldsymbol{s}) = \mathbb{E}_{\boldsymbol{a}\sim\pi^{(t)}(\cdot|\boldsymbol{s})} \left[Q^{\pi_{\text{old}}}(\boldsymbol{s}, \boldsymbol{a}) - \mathbb{E}_{\boldsymbol{a}'\sim\pi^{(0)}(\cdot|\boldsymbol{s})} [Q^{\pi_{\text{old}}}(\boldsymbol{s}, \boldsymbol{a}')]\right].
\end{align}
Under mild regularity assumptions, the transformed policy $\pi^{(t)}(\cdot|\boldsymbol{s}) = \boldsymbol{\phi}_{\boldsymbol{s}}^{(t)}(\pi_\theta(\cdot|\boldsymbol{s}))$ constitutes a valid policy-improvement flow which increases the expected state–action value.
\begin{proposition}\label{prop1}
Assuming $Q^{\pi_{\text{old}}}$ is once continuously differentiable with locally Lipschitz $\nabla_{\boldsymbol{a}} Q^{\pi_{\text{old}}}$, $\boldsymbol{M}$ has bounded operator norm $\norm{\boldsymbol{M}}$ and 
$\norm{\nabla_{\boldsymbol{a}}Q^{\pi_{\text{old}}}}_{\boldsymbol{M}}$ is intergrable under $\pi^{(t)}(\cdot|\boldsymbol{s})$ for relevant t. Then the map 
$t \rightarrow F(t;\boldsymbol{s})$ is monotone nondecreasing, i.e., 
    $\frac{d}{dt}F(t;\boldsymbol{s}) \ge 0$. (See \textbf{Appendix \ref{sec:proof_prop1}} for the proof.)
\end{proposition}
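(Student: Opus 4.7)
The plan is to differentiate $F(t;\boldsymbol{s})$ along the flow and show that its derivative equals an expected squared $\boldsymbol{M}$-weighted norm of the action-gradient, which is nonnegative by positive definiteness of $\boldsymbol{M}$. First I would observe that the subtracted inner expectation $\mathbb{E}_{\boldsymbol{a}'\sim\pi^{(0)}}[Q^{\pi_{\text{old}}}(\boldsymbol{s},\boldsymbol{a}')]$ in the definition of $F$ is constant in $t$, so it suffices to analyze $G(t;\boldsymbol{s}) \coloneqq \mathbb{E}_{\boldsymbol{a}\sim\pi^{(t)}(\cdot|\boldsymbol{s})}[Q^{\pi_{\text{old}}}(\boldsymbol{s},\boldsymbol{a})]$ and verify $\frac{d}{dt}G(t;\boldsymbol{s}) \ge 0$.

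Next, since $\pi^{(t)}(\cdot|\boldsymbol{s})$ is defined as the pushforward of $\pi^{(0)}(\cdot|\boldsymbol{s})$ by the flow map $\boldsymbol{\phi}_{\boldsymbol{s}}^{(t)}$, the Lagrangian change-of-variables identity gives $G(t;\boldsymbol{s}) = \mathbb{E}_{\boldsymbol{a}^{(0)}\sim\pi^{(0)}(\cdot|\boldsymbol{s})}[Q^{\pi_{\text{old}}}(\boldsymbol{s}, \boldsymbol{\phi}_{\boldsymbol{s}}^{(t)}(\boldsymbol{a}^{(0)}))]$. This bypasses the time-varying density and reduces the problem to differentiating a functional of deterministic ODE trajectories $\boldsymbol{a}^{(t)} = \boldsymbol{\phi}_{\boldsymbol{s}}^{(t)}(\boldsymbol{a}^{(0)})$ with random initial condition $\boldsymbol{a}^{(0)} \sim \pi^{(0)}$. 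I would then invoke the hypotheses to justify differentiation under the expectation: local Lipschitzness of $\nabla_{\boldsymbol{a}} Q^{\pi_{\text{old}}}$ together with $\norm{\boldsymbol{M}} < \infty$ yields existence, uniqueness, and continuous differentiability in $t$ of the flow via Picard--Lindel\"of, and the assumed integrability of $\norm{\nabla_{\boldsymbol{a}} Q^{\pi_{\text{old}}}}_{\boldsymbol{M}}$ under $\pi^{(t)}$ supplies the dominating envelope needed for the dominated convergence theorem, giving
\begin{align}
\frac{d}{dt} G(t;\boldsymbol{s}) = \mathbb{E}_{\boldsymbol{a}^{(0)}\sim\pi^{(0)}}\left[\nabla_{\boldsymbol{a}} Q^{\pi_{\text{old}}}(\boldsymbol{s}, \boldsymbol{a}^{(t)})^{\top} \frac{d\boldsymbol{a}^{(t)}}{dt}\right].
\end{align}

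Substituting the ODE $\frac{d\boldsymbol{a}^{(t)}}{dt} = \boldsymbol{M}\nabla_{\boldsymbol{a}} Q^{\pi_{\text{old}}}(\boldsymbol{s},\boldsymbol{a}^{(t)})$ collapses the integrand to the quadratic form $(\nabla_{\boldsymbol{a}} Q^{\pi_{\text{old}}})^{\top} \boldsymbol{M} (\nabla_{\boldsymbol{a}} Q^{\pi_{\text{old}}}) = \norm{\nabla_{\boldsymbol{a}} Q^{\pi_{\text{old}}}(\boldsymbol{s},\boldsymbol{a}^{(t)})}_{\boldsymbol{M}}^2 \ge 0$ by positive definiteness of $\boldsymbol{M}$, hence the expectation is nonnegative. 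Since $\frac{d}{dt}F(t;\boldsymbol{s}) = \frac{d}{dt}G(t;\boldsymbol{s})$, monotonicity follows.

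The main obstacle is the rigorous interchange of differentiation and expectation; the integrability hypothesis in the statement is tailored exactly so that dominated convergence applies uniformly on compact subintervals of $t$. A secondary subtlety is ensuring the flow does not blow up in finite time so that $\boldsymbol{a}^{(t)}$ remains well-defined on the interval of interest, which the local Lipschitz condition on $\nabla_{\boldsymbol{a}} Q^{\pi_{\text{old}}}$ combined with the bounded gradient envelope implied by integrability handles. The remainder of the argument is a direct computation with no further technical content.
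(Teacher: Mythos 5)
Your proposal is correct and follows essentially the same route as the paper's proof: rewrite the expectation over $\pi^{(t)}$ as a pushforward expectation over $\pi^{(0)}$ along the flow map, differentiate under the expectation, apply the chain rule, and substitute the velocity $\boldsymbol{M}\nabla_{\boldsymbol{a}} Q^{\pi_{\text{old}}}$ to obtain the nonnegative $\boldsymbol{M}$-weighted squared gradient norm. Your additional care about Picard--Lindel\"of well-posedness and dominated convergence simply makes explicit what the paper subsumes under its stated regularity and integrability assumptions.
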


Figure~\ref{fig:q_illu} demonstrates that the Q-guided flow consistently steers actions toward high-value regions across action dimensionalities, enabling directed exploration and yielding more informative samples.

\subsection{Q-guided Flow Exploration}

As summarized in Algorithm \ref{algo:fox}, we embed the above results into an actor-critic online RL routine and introduce Q-guided Flow Exploration (\algo), which explores high-dimensional action spaces via sampling from the Q-guided conditional normalizing flow in Eq.~(\ref{eq:q_flow}). 
We parameterize the flow-based policy via a Gaussian initializer $\pi^{(0)}_\theta(\boldsymbol{a}|\boldsymbol{s})$ and state-dependent velocity field $\boldsymbol{v}_{w}(\boldsymbol{a}|t, \boldsymbol{s},\boldsymbol{a}^{(t)})$.
Starting from initial samples drawn from Gaussian policy, \algo~transform actions following the learned vector field by solving the ODE:
\begin{align}
    \label{eq:pi_ode}\pi_{\theta, w}^{(1)} (\boldsymbol{a}|\boldsymbol{s},\boldsymbol{a}^{(0)}) = \boldsymbol{a}^{(0)} + \int_0^1 \boldsymbol{v}_w (t, \boldsymbol{s}, \boldsymbol{a}^{(t)})dt, \quad \boldsymbol{a}^{(0)} \sim \pi^{(0)}_\theta(\cdot|\boldsymbol{s}).
\end{align}
The training of \algo~proceeds as follows:

\textbf{Update of Q-function and Gaussian policy.} At each training iteration, \algo~collects trajectories into replay buffer $\mathcal{B}$ by sampling from the flow-induced policy $\pi_{\theta, w}^{(1)}$ (line 2-4). The Q-function and the Gaussian policy are updated according to the standard policy iteration and policy improvement steps (line 5-6). Since the sample efficiency of \algo~hinges on the quality of the learned Q-function, we employ batch normalization within the Q-network to normalize state–action batches and stabilize optimization \citep{bhatt2024crossq}. This stabilization allows us to dispense with a target Q-network and to train with a low update-to-data ratio, yielding more efficient Q-learning.

\textbf{Q-guided flow construction.}
Starting from samples of $\pi^{(0)}_\theta$, we adopt identity matrix $\boldsymbol{I}$ as the preconditioner of the Q-guided velocity field, which corresponds to Euclidean steepest ascent in action space. We then construct the Q-guided flow by taking 
$N$ finite gradient-ascent steps on the differentiable Q-function, where the transported actions $\boldsymbol{a}^{(1)}$ are treated as samples from the target distribution $\pi^{(1)}$ (line 7-9). Because the Q-network’s gradients can be poorly behaved outside the admissible action domain, updates near the boundary may push actions outside $[-1, 1]^{|\mathcal{A}|}$. Thus a fixed step size $\eta$ can destabilize learning. To mitigate this, we cap each update using the $l_2$‐diameter of the action space:
\begin{align}
\boldsymbol{a}^{(\frac{n}{N})} \leftarrow \boldsymbol{a}^{(\frac{n-1}{N})} + \bar{\eta} \nabla_{\boldsymbol{a}} Q_{\phi}(\boldsymbol{s}_h, \boldsymbol{a}^{(\frac{n-1}{N})}), \quad \bar{\eta} = \min \left(\eta, \frac{2\sqrt{|\mathcal{A}|}}{\norm{\nabla_{\boldsymbol{a}} Q_{\phi}(\boldsymbol{s}_h, \boldsymbol{a}^{(\frac{n-1}{N})})}}\right).
\end{align}
The truncated step size bounds the per-iteration displacement, enabling stable, valid exploration within the action space.

\textbf{Update of Q-guided velocity field.}
Given target $\boldsymbol{a}^{(1)}$ and source sample $\boldsymbol{a}^{(0)}$ from
Gaussian policy $\pi^{(0)}_\theta$, we specify the optimal transport conditional probability path and its target velocity field as:
\begin{align}
    & p^{(t)}(\boldsymbol{a}^{(t)}|\boldsymbol{s}, \boldsymbol{a}^{(0)}, \boldsymbol{a}^{(1)}) = \delta\bigg(\boldsymbol{a}^{(t)} - \left[(1-t)\boldsymbol{a}^{(0)} + t\boldsymbol{a}^{(1)}\right]\bigg),\\
    &\boldsymbol{v}^{(t)}(\boldsymbol{a}^{(t)} | \boldsymbol{s}, \boldsymbol{a}^{(0)}, \boldsymbol{a}^{(1)}) = \boldsymbol{a}^{(1)}-\boldsymbol{a}^{(0)},
\end{align}
where $\delta(\cdot)$ denotes the Dirac distribution. The velocity field $\boldsymbol{v}_w$ can be updated by optimizing the state-dependent conditional flow matching loss (line 11):
\begin{align}
    \label{eq:v_loss}\mathcal{L}_{\boldsymbol{v}}(w) =  \mathbb{E}_{\substack{
    t \sim \mathcal{U}([0, 1])\\ 
    \boldsymbol{s}, \boldsymbol{a}^{(0)} \sim \pi^{(0)}_\theta(\cdot|\boldsymbol{s})\\
    \boldsymbol{a}^{(1)}\sim \pi^{(1)}(\cdot|\boldsymbol{s}, \boldsymbol{a}^{(0)}) \\
    \boldsymbol{a}^{(t)} \sim  p^{(t)}(\cdot|\boldsymbol{s}, \boldsymbol{a}^{(0)}, \boldsymbol{a}^{(1)}) 
    }} \norm{\boldsymbol{v}_{w}(t, \boldsymbol{s},\boldsymbol{a}^{(t)}) - \boldsymbol{v}^{(t)}(\boldsymbol{a}^{(t)} | \boldsymbol{s}, \boldsymbol{a}^{(0)}, \boldsymbol{a}^{(1)})}^2
\end{align}






Compared with diffusion-based online RL methods that initialize from a fixed standard Gaussian, \algo~maintains a \emph{learnable} source distribution. This yields informative initialization points for transport toward the target distribution and substantially easing the learning of high-performing flow-based policies. In contrast to approaches that rely on dimensionality reduction, \algo~preserves the full flexibility of high-dimensional dynamical systems by exploring the \emph{original} action space, thereby facilitating agile, complex motor control.

Although Algorithm~\ref{algo:fox} presents a minimalist instantiation, \algo~readily extends to various RL frameworks and exploration regimes. The flow-based policy parameterization permits direct simulation of policy likelihoods via the instantaneous change of variables \citep{chen2018neural}, making \algo~naturally compatible with KL-constrained policy optimization \citep{schulman2015trust,schulman2017proximal} and maximum-entropy RL \citep{haarnoja2017reinforcement}. Moreover, geometry-aware or curvature-adaptive choices of the preconditioner $\boldsymbol{M}$ (e.g. natural-gradient or Newton-type updates) can induce more structured exploration to accelerate search. We leave a systematic study of these design choices to future work.

\begin{figure*}[btp]
  \centering
  \includegraphics[width=1\linewidth]{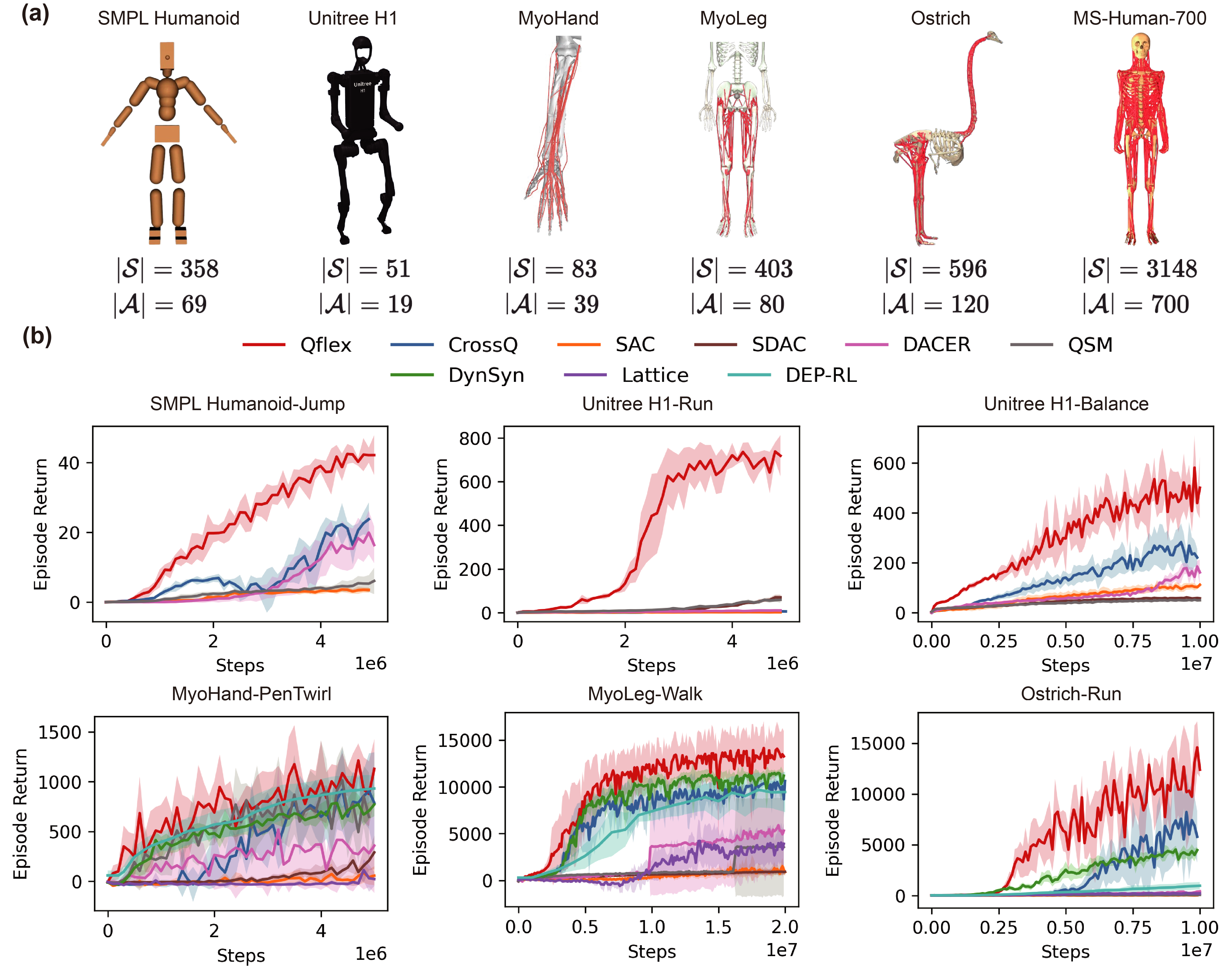}
  \vspace{-20pt}
  \caption{\textbf{Control over high-dimensional control benchmarks.} (a) Morphologies and state-action dimensions of evaluated benchmarks. (b) Learning curve of algorithms. Results show mean performances with one standard deviation of 5 independent runs. Baselines in the second row are run only on musculoskeletal benchmarks.}
  \label{fig:syn_res}
\end{figure*}

\section{Experiment}

In this section, we present a comprehensive evaluation of \algo~for high-dimensional continuous control. We first compare \algo~against extensive online RL baselines on simulated benchmarks. Then we demonstrate its control performance on a 700-actuator human musculoskeletal model executing agile, full-body movements. Finally, we analyze \algo’s behavior to assess its scalability in exploration.
For all experiments, we construct the Q-guided flow by $N=20$ gradient steps with initial step size $\eta=0.01$. The ODE in Eq.~(\ref{eq:pi_ode}) is solved with a naive Euler integrator by $20$ discrete steps with timestep $\Delta t=0.05$. Our code and video results can be found in our project page: \url{https://lnsgroup.cc/research/Qflex}.

\subsection{Control over high-dimensional simulated benchmarks}

We evaluate on a diverse suite of simulated high-dimensional continuous-control benchmarks: \textbf{(1) SMPL Humanoid–Jump} \citep{tirinzoni2025zero}, which controls a humanoid agent based on the SMPL skeleton \citep{loper2023smpl} to execute jumps; \textbf{(2) Unitree H1–Run/Balance} \citep{sferrazza2024humanoidbench}, which controls a Unitree H1 humanoid\footnote{\url{https://github.com/unitreerobotics/unitree_ros}} to run forward or maintain balance on an unstable platform; \textbf{(3) MyoHand–PenTwirl / MyoLeg–Walk} \citep{caggiano2022myosuite}, which controls a hand musculoskeletal system to twirl a pen and a lower-body musculoskeletal system to walk; and \textbf{(4) Ostrich–Run} \citep{la2021ostrichrl}, which controls an ostrich musculoskeletal system to run. The state and action spaces for all tasks are summarized in Figure~\ref{fig:syn_res} (a).

We compare \algo~to representative online RL baselines: \textbf{(1) Gaussian-based:} CrossQ \citep{bhatt2024crossq}, SAC \citep{haarnoja2018soft}; \textbf{(2) Diffusion-based:} SDAC \citep{ma2025soft}, DACER \citep{wang2024diffusion}, QSM \citep{psenka2023learning}; and \textbf{(3) High-dimensional musculoskeletal control:} DynSyn \citep{he2024dynsyn}, Lattice \citep{chiappa2023latent}, DEP-RL \citep{schumacher2022dep}.

As shown in Figure~\ref{fig:syn_res} (b), \algo~demonstrates consistently superior learning efficiency across all benchmarks. The performance gap widens with increasing action dimensionality and over-actuation, indicating scalable exploration behavior.


%




\begin{figure*}[tbp]
  \centering
  \includegraphics[width=\linewidth]{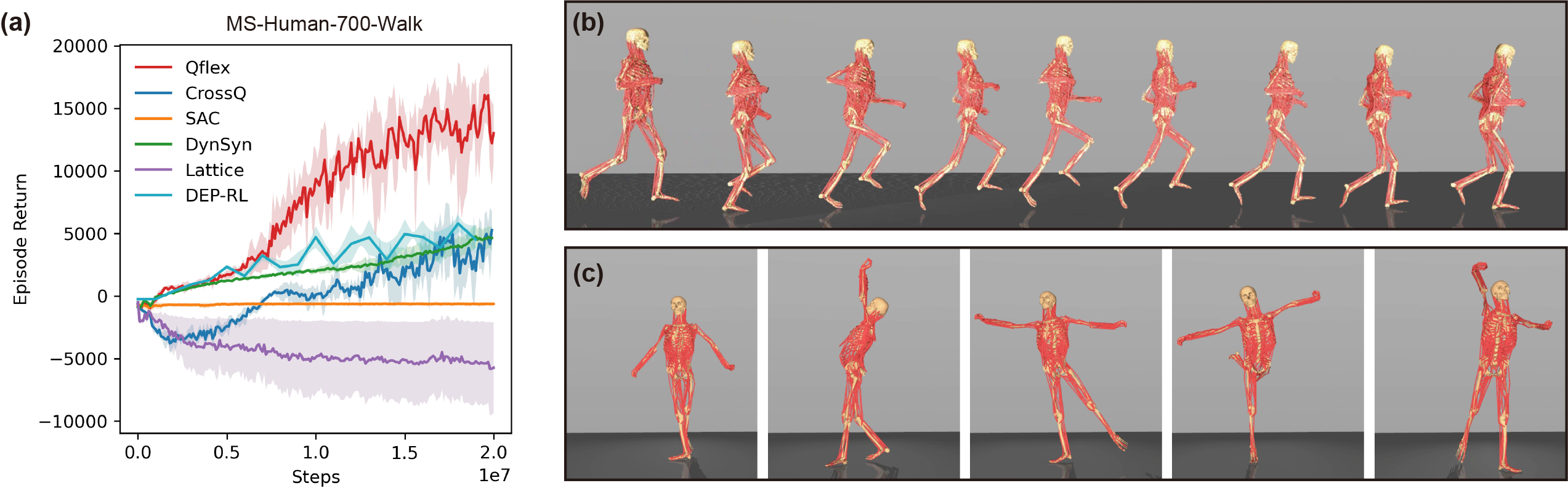}
  \caption{\textbf{Control over full-body human musculoskeletal system.} (a) Learning efficiency over walking control of MS-Human-700. Results show mean performances with one standard deviation of 5 independent runs. (b) Learned behavior of whole-body running. (c) Learned behavior of ballet dancing.}
  \label{fig:ms_res}
  \vspace{-10pt}
\end{figure*}

\begin{wrapfigure}{r}{0.5\textwidth}
  \begin{center}
\includegraphics[width=0.5\textwidth]{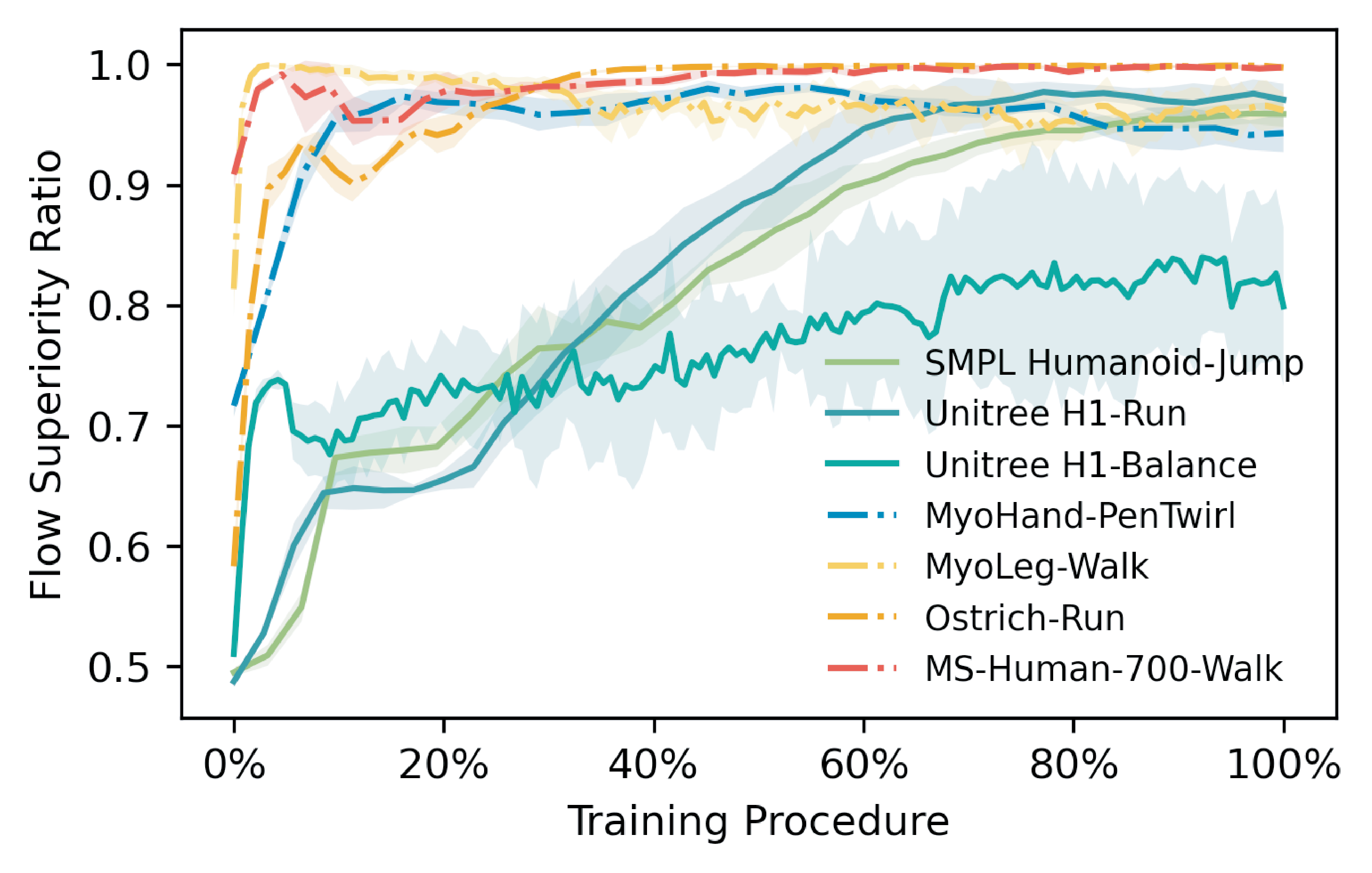}
  \end{center}
  \vspace{-16.9pt}
  \caption{\textbf{Sample quality between \algo~and source Gaussian policy during training.} Over-actuated musculoskeletal control tasks are denoted as dash-dotted lines.}
  \label{fig:algo_analyze}
\end{wrapfigure}

\subsection{Control over full-body human musculoskeletal system}

We employ \algo~for locomotion control of MS-Human-700 \citep{zuo2024self}, a full-body musculoskeletal system with 206 joints and 700 muscle-tendon units. Its state–action dimensionality is more than five times that of the most complex benchmark in the previous subsection (Ostrich–Run).
As shown in Figure~\ref{fig:ms_res} (a), \algo~ exhibits high-learning efficiency and strong scalability over whole-body walking control, outperforms existing high-dimensional musculoskeletal control baselines by a large margin \emph{without} dimension reduction.

We further deploy \algo~on two challenging skills—running and ballet dancing—that, to our knowledge, have not previously been demonstrated on a 700-actuator full-body system. In Figure~\ref{fig:ms_res} (b), \algo~enables rapid high-dimensional sensorimotor coordination, achieving a stable running gait. In Figure~\ref{fig:ms_res} (c), \algo~successfully imitates a ballet routine featuring complex whole-body sequences with single-foot spins and balance. By exploring in the \emph{original} action space, \algo~fully leverages the flexibility of high-dimensional dynamical systems, enabling agile and complex motion control.




\subsection{Algorithm analysis}

We examine \algo{}’s efficiency by analyzing its learning dynamics. To compare sampling quality between the flow-based policy and the Gaussian reference, we track the \emph{flow superiority ratio} during training, which is the proportion of states in a minibatch for which
\begin{align}
    Q(\boldsymbol{s}, \pi^{(1)}_{\theta,w}(\cdot|\boldsymbol{s})) > Q(\boldsymbol{s}, \pi^{(0)}_{\theta}(\cdot|\boldsymbol{s})).
\end{align}
\begin{wrapfigure}{r}{0.65\textwidth}
  \begin{center}
  \includegraphics[width=0.65\textwidth]{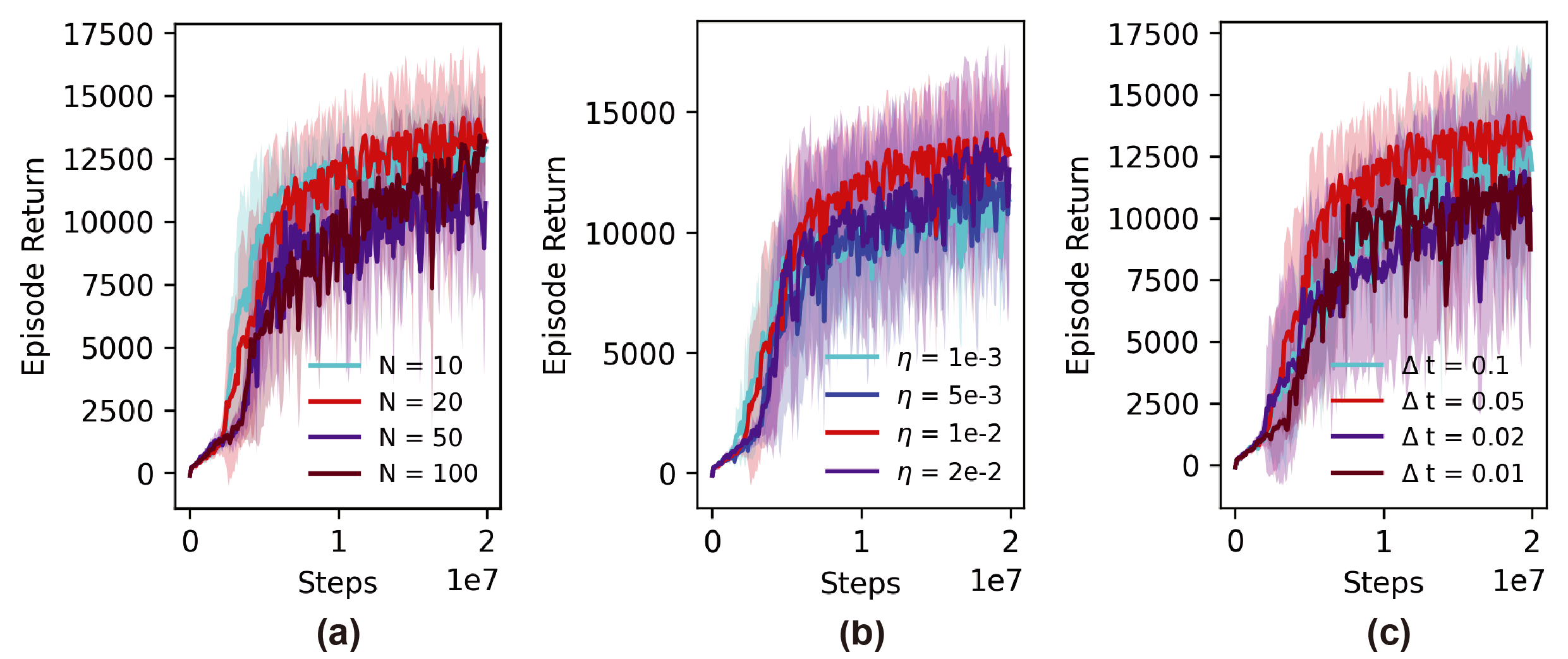}
  \end{center}
  \vspace{-12.5pt}
\caption{\textbf{Ablation study over hyperparameters of \algo.} (a) Gradient steps $N$. (b) Step size $\eta$. (c) Euler solving timestep $\Delta t$.}
  \label{fig:ab_res}
\end{wrapfigure}
As shown in Figure~\ref{fig:algo_analyze}, the flow-based policy consistently yields higher state-action values than Gaussian exploration, and this advantage strengthens over the course of training. Notably, the superiority ratio is substantially higher on musculoskeletal control tasks than on torque-controlled benchmarks, underscoring the importance of value-aligned exploration in high-dimensional, over-actuated settings.


On MyoLeg-Walk task, we further perform a sensitivity study over \algo{}’s hyperparameters: number of gradient steps $N$, step size $\eta$ and Euler solving timestep $\Delta t$. Figure~\ref{fig:ab_res} shows broadly comparable learning performance across a reasonable range of these choices.



\section{Conclusion}
In this paper, we introduce \algo{}, a scalable online RL method for efficient exploration in high-dimensional continuous control. Our method conducts directed exploration by sampling from a Q-guided probability flow with policy-improvement guarantees, yielding superior learning efficiency over representative online RL baselines across benchmarks characterized by high dimensionality and over-actuation. \algo{} further demonstrates agile, complex motion control on a full-body musculoskeletal model with 700 actuators, achieving high efficiency and strong scalability in truly high-dimensional settings. Our analysis shows that value-aligned exploration in \algo{} surpasses undirected sampling strategies in high-dimensional regimes, which is readily extensible to a variety of online RL frameworks and exploration settings.

\subsubsection*{Acknowledgments}
This work is supported by STI 2030-Major Projects 2022ZD0209400, Beijing Academy of Artificial Intelligence and Beijing Municipal Science \& Technology Commission Z251100008125022, and AMD Research. Correspondence to: Yanan Sui (ysui@tsinghua.edu.cn).

\textbf{Ethics statement.} This work follows the ICLR Code of Ethics. We considered the potential ethical and societal impacts of this work. No human or animal subjects were directly involved. We report limitations and assumptions transparently and strive to promote beneficial and responsible use of this work. 

\textbf{Reproducibility statement.} We provide an anonymous codebase, full hyperparameters, and exact evaluation protocols to enable faithful replication.

\textbf{LLM usage statement.} We used a large language model solely for language polishing.

\bibliography{iclr2026_conference}
\bibliographystyle{iclr2026_conference}

\appendix
\section{Theoretical Proofs}

\subsection{Proof of case analysis}\label{sec:proof_example}

\textbf{Case analysis: vanishing exploration in high DoF settings.} Consider a planar kinematic chain with $|\mathcal{A}|$ degrees-of-freedom (i.e. $|\mathcal{A}|$ revolute joints and a terminal link) in 2D, where each link has length $l_i = L/|\mathcal{A}|$. Under i.i.d. zero-mean joint-angle perturbations with fixed variance, the end-effector position variance scales as $O(\frac{1}{|\mathcal{A}|})$; equivalently, it decays proportionally to $\frac{1}{|\mathcal{A}|}$ as $|\mathcal{A}|$ grows. 
\begin{proof}
We denote the position of the end-effector as $\boldsymbol{x}=(x, y)$. The forward kinematics of the system can be expressed as a function:
\begin{align}
    \boldsymbol{x} = f(\boldsymbol{\varphi}),
\end{align}
where $\boldsymbol{\varphi} = (\varphi_1, \cdots, \varphi_{|\mathcal{A}|})$ is the system joint positions. For small noise, we can use a first-order Taylor expansion of $f(\boldsymbol{\varphi})$ around the current joint position $\bar{\boldsymbol{\varphi}}$:
\begin{align}
\boldsymbol{x}\approx f(\bar{\boldsymbol{\varphi}}) + J(\bar{\boldsymbol{\varphi}})\delta\boldsymbol{\varphi},
\end{align}
where $J(\bar{\boldsymbol{\varphi}})$ is the Jacobian matrix of the forward kinematics with respect to $\bar{\boldsymbol{\varphi}}$, and $\delta\boldsymbol{\varphi}=(\delta\varphi_1, \cdots, \delta\varphi_{|\mathcal{A}|})$ with $\Var(\delta\varphi_i) = \sigma_i^2$. The covariance matrix of $\delta\boldsymbol{\varphi}$ is $\Sigma_{\boldsymbol{\varphi}} = \sigma^2\boldsymbol{I}$. Therefore the covariance of the end position $\boldsymbol{x}$ is given by:
\begin{align}
    \Sigma_{\boldsymbol{x}} = J(\bar{\boldsymbol{\varphi}})\Sigma_{\boldsymbol{\varphi}}J(\bar{\boldsymbol{\varphi}})^T & = \sum_{i=1}^{|\mathcal{A}|}\sigma_i^2 \norm{J_{:, i}} \le \sigma_{\text{max}}^2\sum_{i=1}^{|\mathcal{A}|} \norm{J_{:, i}},
\end{align}
where $\norm{J_{:, i}}$ is the norm of the $i$-th column of the Jacobian matrix $J$, and $\sigma_{\text{max}} = \max_i \sigma_i$ .
Where we can extract the end position variance as the trace of $\Sigma_{\boldsymbol{x}}$:
\begin{align}
    \text{Var}(\boldsymbol{x}) = \text{Tr}(\Sigma_{\boldsymbol{x}})
\end{align}
For a planar $|\mathcal{A}|$-link system where each link $l_i = L/|\mathcal{A}|$, the Jacobian entries are influenced by these link lengths, and the trace term can be approximated as:
\begin{align}
    \text{Tr}(\sum_{i=1}^{|\mathcal{A}|} \norm{J_{:, i}}) \approx |\mathcal{A}|\left(\frac{L^2}{|\mathcal{A}|^2}\right) = \frac{L^2}{|\mathcal{A}|},
\end{align}
which leads to the total variance of the end-effector as:
\begin{align}
    \text{Var}(\boldsymbol{x}) = \frac{\sigma_{\text{max}}^2L^2}{|\mathcal{A}|}
\end{align}

\end{proof}

\subsection{Proof of Proposition \ref{prop1}}\label{sec:proof_prop1}

\setcounter{proposition}{0}

\begin{proposition}
Assuming $Q^{\pi_{\text{old}}}$ is once continuously differentiable with locally Lipschitz $\nabla_{\boldsymbol{a}} Q^{\pi_{\text{old}}}$, $\boldsymbol{M}$ has bounded operator norm $\norm{\boldsymbol{M}}$ and 
$\norm{\nabla_{\boldsymbol{a}}Q^{\pi_{\text{old}}}}_{\boldsymbol{M}}$ is intergrable under $\pi^{(t)}(\cdot|\boldsymbol{s})$ for relevant t. Then the map 
$t \rightarrow F(t;\boldsymbol{s})$ is monotone nondecreasing, i.e., 
    $\frac{d}{dt}F(t;\boldsymbol{s}) \ge 0$. 
\end{proposition}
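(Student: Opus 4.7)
The plan is to differentiate $F(t;\boldsymbol{s})$ in $t$, pass the derivative inside the expectation via a Lagrangian (flow-map) reformulation, and then use the chain rule together with the definition of the Q-guided velocity field in Eq.~(\ref{eq:q_flow}) to obtain a manifestly nonnegative quadratic form. Since the second term of $F(t;\boldsymbol{s})$ is constant in $t$, it drops out, and only $G(t;\boldsymbol{s}) \coloneqq \mathbb{E}_{\boldsymbol{a}\sim\pi^{(t)}(\cdot|\boldsymbol{s})}[Q^{\pi_{\text{old}}}(\boldsymbol{s},\boldsymbol{a})]$ needs to be analyzed.

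First I would rewrite $G(t;\boldsymbol{s})$ in Lagrangian form using the flow map $\boldsymbol{\phi}^{(t)}_{\boldsymbol{s}}$: since $\pi^{(t)}(\cdot|\boldsymbol{s}) = \boldsymbol{\phi}^{(t)}_{\boldsymbol{s}}{}_\sharp\, \pi^{(0)}(\cdot|\boldsymbol{s})$, the change-of-variables identity gives
\begin{align}
G(t;\boldsymbol{s}) = \mathbb{E}_{\boldsymbol{a}^{(0)}\sim \pi^{(0)}(\cdot|\boldsymbol{s})}\bigl[\,Q^{\pi_{\text{old}}}\bigl(\boldsymbol{s},\boldsymbol{\phi}^{(t)}_{\boldsymbol{s}}(\boldsymbol{a}^{(0)})\bigr)\,\bigr].
\end{align}
Local Lipschitzness of $\nabla_{\boldsymbol{a}} Q^{\pi_{\text{old}}}$ together with bounded $\|\boldsymbol{M}\|$ ensures that the ODE in Eq.~(\ref{eq:q_flow}) admits a unique maximal solution and that $t\mapsto \boldsymbol{\phi}^{(t)}_{\boldsymbol{s}}(\boldsymbol{a}^{(0)})$ is $C^1$ for every starting point, so the integrand is differentiable in $t$ almost surely.

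Next I would swap differentiation and expectation, justified by dominated convergence: the pointwise derivative equals $\nabla_{\boldsymbol{a}} Q^{\pi_{\text{old}}}(\boldsymbol{s},\boldsymbol{a}^{(t)})^\top \boldsymbol{M}\nabla_{\boldsymbol{a}} Q^{\pi_{\text{old}}}(\boldsymbol{s},\boldsymbol{a}^{(t)}) = \|\nabla_{\boldsymbol{a}} Q^{\pi_{\text{old}}}(\boldsymbol{s},\boldsymbol{a}^{(t)})\|_{\boldsymbol{M}}^2$, which is assumed integrable under $\pi^{(t)}$. Pushing the derivative inside and applying the chain rule with $\frac{d\boldsymbol{a}^{(t)}}{dt} = \boldsymbol{M}\nabla_{\boldsymbol{a}} Q^{\pi_{\text{old}}}(\boldsymbol{s},\boldsymbol{a}^{(t)})$ yields
\begin{align}
\frac{d}{dt} F(t;\boldsymbol{s}) \;=\; \frac{d}{dt} G(t;\boldsymbol{s}) \;=\; \mathbb{E}_{\boldsymbol{a}\sim \pi^{(t)}(\cdot|\boldsymbol{s})}\bigl[\,\|\nabla_{\boldsymbol{a}} Q^{\pi_{\text{old}}}(\boldsymbol{s},\boldsymbol{a})\|_{\boldsymbol{M}}^2\,\bigr] \;\ge\; 0,
\end{align}
where the final inequality uses that $\boldsymbol{M}$ is positive definite so $\|\cdot\|_{\boldsymbol{M}}^2 \ge 0$ pointwise. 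Monotonicity of $F(\cdot;\boldsymbol{s})$ follows.

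The main obstacle is the analytic justification of the interchange of derivative and expectation. The regularity hypotheses are tailored exactly for this: local Lipschitzness of $\nabla_{\boldsymbol{a}} Q^{\pi_{\text{old}}}$ gives existence, uniqueness, and continuous dependence of the flow (so the integrand is jointly measurable and smooth in $t$), bounded $\|\boldsymbol{M}\|$ controls the velocity, and the integrability of $\|\nabla_{\boldsymbol{a}} Q^{\pi_{\text{old}}}\|_{\boldsymbol{M}}$ under $\pi^{(t)}$ provides the dominating function needed to apply the differentiation-under-the-integral theorem on any compact subinterval of $t$. Once these technicalities are cleanly handled, the nonnegativity step is immediate from positive definiteness of $\boldsymbol{M}$.
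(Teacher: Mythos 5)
Your proposal is correct and follows essentially the same route as the paper's proof: reparameterize $F$ as an expectation over $\pi^{(0)}$ via the flow map, drop the constant baseline term, differentiate under the expectation, apply the chain rule with the ODE $\frac{d}{dt}\boldsymbol{\phi}^{(t)}_{\boldsymbol{s}} = \boldsymbol{M}\nabla_{\boldsymbol{a}}Q^{\pi_{\text{old}}}$, and push back to $\pi^{(t)}$ to obtain $\mathbb{E}_{\pi^{(t)}}\bigl[\norm{\nabla_{\boldsymbol{a}}Q^{\pi_{\text{old}}}}_{\boldsymbol{M}}^2\bigr]\ge 0$. Your explicit dominated-convergence justification for the interchange is a bit more careful than the paper's, but the argument is the same.
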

\begin{proof}
Let $\boldsymbol{a}^{(t)} = \boldsymbol{\phi}_{\boldsymbol{s}}^{(t)}(\boldsymbol{a}^{(0)})$. Then we can reparameterize $F$ as
\begin{align}
    F(t;\boldsymbol{s}) = \mathbb{E}_{\boldsymbol{a}\sim\pi^{(0)}(\cdot|\boldsymbol{s})} \left[Q^{\pi_{\text{old}}}(\boldsymbol{s}, \boldsymbol{\phi}_{\boldsymbol{s}}^{(t)}(\boldsymbol{a})) - \mathbb{E}_{\boldsymbol{a}'\sim\pi^{(0)}(\cdot|\boldsymbol{s})} [Q^{\pi_{\text{old}}}(\boldsymbol{s}, \boldsymbol{a}')]\right]
\end{align}
The differentiate under the expectation is
\begin{align}
    \frac{d}{dt}F(t;\boldsymbol{s}) & =  \frac{d}{dt} \mathbb{E}_{\boldsymbol{a}\sim\pi^{(0)}(\cdot|\boldsymbol{s})} \left[Q^{\pi_{\text{old}}}(\boldsymbol{s}, \boldsymbol{\phi}_{\boldsymbol{s}}^{(t)}(\boldsymbol{a}) - \mathbb{E}_{\boldsymbol{a}'\sim\pi^{(0)}(\cdot|\boldsymbol{s})} [Q^{\pi_{\text{old}}}(\boldsymbol{s}, \boldsymbol{a}')]\right]\\
    & = \frac{d}{dt} \mathbb{E}_{\boldsymbol{a}\sim\pi^{(0)}(\cdot|\boldsymbol{s})} \left[Q^{\pi_{\text{old}}}(\boldsymbol{s}, \boldsymbol{\phi}_{\boldsymbol{s}}^{(t)}(\boldsymbol{a}))]\right]\\
    \label{eq:f_chain}& = \mathbb{E}_{\boldsymbol{a}\sim\pi^{(0)}(\cdot|\boldsymbol{s})}\left[ \nabla_{\boldsymbol{a}} Q^{\pi_{\text{old}}}(\boldsymbol{s}, \boldsymbol{\phi}_{\boldsymbol{s}}^{(t)}(\boldsymbol{a})) ^\top \frac{d}{dt} \boldsymbol{\phi}_{\boldsymbol{s}}^{(t)}(\boldsymbol{a}) \right]\\
    & = \mathbb{E}_{\boldsymbol{a}\sim\pi^{(0)}(\cdot|\boldsymbol{s})}\left[ \nabla_{\boldsymbol{a}} Q^{\pi_{\text{old}}}(\boldsymbol{s}, \boldsymbol{\phi}_{\boldsymbol{s}}^{(t)}(\boldsymbol{a})) ^\top \boldsymbol{M} \nabla_{\boldsymbol{a}} Q^{\pi_{\text{old}}}(\boldsymbol{s}, \boldsymbol{\phi}_{\boldsymbol{s}}^{(t)}(\boldsymbol{a})) \right] 
    \\
    \label{eq:f_repara}& = \mathbb{E}_{\boldsymbol{a}\sim\pi^{(t)}(\cdot|\boldsymbol{s})} \left[\nabla_{\boldsymbol{a}} Q^{\pi_{\text{old}}}(\boldsymbol{s}, \boldsymbol{a})^\top \boldsymbol{M}\nabla_{\boldsymbol{a}} Q^{\pi_{\text{old}}}(\boldsymbol{s}, \boldsymbol{a}) \right]
    \\
    & = \mathbb{E}_{\boldsymbol{a}\sim\pi^{(t)}(\cdot|\boldsymbol{s})}\left[\norm{\nabla_{\boldsymbol{a}} Q^{\pi_{\text{old}}}(\boldsymbol{s}, \boldsymbol{a})}^2_{\boldsymbol{M}} \right] \ge 0,
\end{align}
where Eq.~(\ref{eq:f_chain}) follows the derivative chain rule, and Eq.~(\ref{eq:f_repara}) is derived by reparameterization.
\end{proof}

\section{Experimental Details}

\subsection{Algorithm implementation}

We implement \algo~on the JAX platform \citep{jax2018github}. Specifically, the neural networks are implemented using Haiku\footnote{\url{https://github.com/google-deepmind/dm-haiku}} with parameters optimized with Optax\footnote{\url{https://github.com/google-deepmind/optax}}. 

For the implementation of SAC, DACER and QSM, we refer to DACER-Diffusion-with-Online-RL\footnote{\url{https://github.com/happy-yan/DACER-Diffusion-with-Online-RL}} in the official code repository of DACER, which provide efficient JAX-based implementation of SAC and diffusion-based online RL baselines.

For the implementation of SDAC, we directly use the official repository\footnote{\url{https://github.com/mahaitongdae/diffusion_policy_online_rl}}, which provides JAX-based implement based on DACER repository.

For the implementation of CrossQ, we refer to the official repository\footnote{\url{https://github.com/adityab/CrossQ}}, and reproduce a JAX-based implementation to improve the time efficiency of training.

For the implementation of DynSyn, we directly use the official repository\footnote{\url{https://github.com/Beanpow/DynSyn}}, and use SAC as the RL backbone.

For the implementation of Lattice, we directly use the official repository\footnote{\url{https://github.com/amathislab/lattice}}, and use SAC as the RL backbone.

For the implementation of DEP-RL, we directly use the official repository\footnote{\url{https://github.com/martius-lab/depRL}}, and use SAC as the RL backbone.

For all algorithms, we align the network parameters and learning rate with 1 gradient steps after each parallel sampling step. For the training of Lattice, we follow the default training setting of 8 gradient steps after 8 parallel sampling steps, which we consider comparable to other baselines on average. Otherwise we use the default hyperparameter in the original implementation. The full experimental details is listed in Table \ref{tab:train_detail} and \ref{tab:algo_hype}.

\begin{table}[hbtp]
    \centering
    \caption{Training details of each environments. }
    \label{tab:train_detail}
    \resizebox{\textwidth}{!}{
    \begin{tabular}{ccccccccc}
\toprule
          System &  Unitree H1 &  SMPL Humanoid &  MyoHand &  MyoLeg &  Ostrich &  MS-Human-700 \\
\midrule
Parallel number &          70 &            80 &       80 &     80 &      80 &          224 \\
\midrule
Critic hidden layer &  3 & 3& 3& 3& 3& 3 \\
\midrule
Critic hidden size & 256 & 256 & 256 & 256 & 256 & 1024 \\
\midrule
Policy hidden layer & 3 & 3& 3& 3& 3& 3 \\
\midrule
Policy hidden size & 256 & 256 & 256 & 256 & 256 & 1024 \\
\midrule
Diffusion/flow hidden layer &    3 & 3& 3& 3& 3& 3 \\
\midrule
Diffusion/flow hidden size & 256 & 256 & 256 & 256 & 256 & 1024 \\
\bottomrule
\end{tabular}

    }
    \end{table}

 \begin{table}[htbp]
    \centering
    \caption{Hyperparameter settings.}
    \label{tab:algo_hype}
    \resizebox{\textwidth}{!}{
    \begin{tabular}{cccccccccccc}
\toprule
    &                                              \algo &  SDAC &  DACER &  QSM &  CrossQ &  SAC &  DynSyn &  DEP-RL & Lattice \\
\midrule
        Gradient steps &                                                  1 & 1 &1 &1 &1 &1 &1 &1 &     8/8 \\
        \midrule
              Discount & 0.99 & 0.99 & 0.99 & 0.99 & 0.99 & 0.99 & 0.99 & 0.99 & 0.99 \\
              \midrule
            Batch size & 256 & 256 & 256 & 256 & 256 & 256 & 256 & 256 & 256 \\
            \midrule
           Buffer size & 1e6  & 1e6 & 1e6 & 1e6 & 1e6 & 1e6 & 1e6 & 1e6 & 1e6 \\
           \midrule
         Learning rate & 3e-4 & 3e-4 & 3e-4 & 3e-4 & 3e-4 & 3e-4 & 3e-4 & 3e-4 & 3e-4  \\
         \midrule
             Optimizer & Adam & Adam & Adam & Adam & Adam & Adam & Adam & Adam & Adam \\
             \midrule
  Diffusion/flow steps & 20 & 20 & 20 & 20 & - & -& -& -& - \\
  \midrule
  Batch normalization decay & 0.99 & - & -& - & 0.99 & - & - & - & -\\
  \midrule
  Batch normalization $\epsilon$ & 1e-5 & - & -& - & 1e-5 & - & - & - & -\\
  \midrule
  Target policy entropy& - & $-0.9\cdot|\mathcal{A}|$ & $-0.9\cdot|\mathcal{A}|$& - & $-|\mathcal{A}|$ & $-|\mathcal{A}|$ & $-|\mathcal{A}|$ & $-|\mathcal{A}|$ & $-|\mathcal{A}|$\\
\bottomrule
\end{tabular}

    }
    \end{table}

\subsection{Benchmark implementation}

\textbf{SMPL-Humanoid-Jump.} We implement the benchmark using the \texttt{jump-2} task in the official Humenv repository\footnote{\url{https://github.com/facebookresearch/humenv}} with provided reward function. The environment is wrapped to be compatible with Gymnasium environment make function.

\textbf{Unitree H1-Run/Balance.} We use \texttt{h1-run-v0} and \texttt{h1-balance\_simple-v0} tasks in the official HumanoidBench repository\footnote{\url{https://github.com/carlosferrazza/humanoid-bench}} to implement the benchmark with provided reward function.  

\textbf{MyoHand-PenTwirl/MyoLeg-Walk.} We use \texttt{myoHandPenTwirlRandom-v0} and \texttt{myoLegWalk-v0} tasks in the official MyoSuite repository\footnote{\url{https://github.com/MyoHub/myosuite}} to implement the benchmarks with provided reward function.  

\textbf{Ostrich-Run.} We implement the benchmarks using the \texttt{ostrich-run} task in the official OstrichRL repository\footnote{\url{https://github.com/vittorione94/ostrichrl}} with provided reward function. The environment is wrapped to be compatible with Gymnasium environment make function.

\begin{table}[hbtp]
    \centering
    \caption{States in the MS-Human-700-Walking environments. }
    \label{tab:ms_env_state}
    \begin{tabular}{ccccccccc}
\toprule
          State &  Dimension \\
\midrule
Joint position & 85 \\
Joint velocity & 85 \\
Joint acceleration & 85 \\
Actuator activation & 700 \\
Actuator force & 700 \\
Actuator length & 700 \\
Actuator velocity & 700 \\
Simulation time & 1 \\
Phase in walking period & 1 \\
Pelvis position & 3 \\
Sternum position & 3 \\
Joint position error & 85\\
\bottomrule
\end{tabular}
\end{table}

\textbf{MS-Human-700-Walk.} We develop task environments with MS-Human-700 under Gymnasium. The full states (observations) and dimensions are listed in Table \ref{tab:ms_env_state}.
We design the following walk reward functions to make the 700-actuator full-body model to walk forward based on a reference walking trajectory from motion capture data:
\begin{align}
    r_{\text{walk}} = 50\cdot r_{\text{qpos}} + 0.1\cdot r_{\text{qvel}} + 50\cdot r_{\text{act}} + 5\cdot r_{\text{vel}} + 100\cdot r_{\text{healthy}},
\end{align}
where $r_{\text{qpos}}$ penalizes the squared error of between model and reference joint position; $r_{\text{qvel}}$ penalizes the squared error of between model and reference joint velocity; $r_{\text{act}}$ penalizes the $l_2$-norm of the total actuator forces; $r_{\text{vel}}$ penalizes the squared error of between model and reference center-of-mass velocity; $r_{\text{healthy}}$ encourages the model not to fall down and deviate from the reference trajectory. 

\textbf{MS-Human-700-Run.} We design the following walk reward functions to make the 700-actuator full-body model to run forward based on a reference trajectory from CMU Graphics Lab Motion Capture Database\footnote{\url{https://mocap.cs.cmu.edu/}} (Subject \#2, Trial \#3):
\begin{align}
    r_{\text{run}} = 10\cdot r_{\text{qpos}} + 100\cdot r_{\text{healthy}},
\end{align}
with reward terms defined same as \texttt{MS-Human-700-Walk} under different reference trajectory.

\textbf{MS-Human-700-Dance.} We design the following walk reward functions to make the 700-actuator full-body model to perform ballet dancing based on a clip of reference trajectory from CMU Graphics Lab Motion Capture Database (Subject \#5, Trial \#9):
\begin{align}
    r_{\text{dance}} = 5\cdot r_{\text{qpos}} + 100\cdot r_{\text{xpos}} + 100\cdot r_{\text{healthy}},
\end{align}
 where $r_{\text{xpos}}$ penalizes the squared error of between model and reference body position. The remaining reward terms are defined same as \texttt{MS-Human-700-Walk} under different reference trajectory.

\section{Additional Experiments}

\textbf{Comparison with PPO.} We follow the official HumanoidBench repository and evaluate PPO on the Unitree H1 Run/Balance task using Stable-Baselines3. We observe that PPO exhibits limited reward improvement, which is consistent with the findings reported in the HumanoidBench paper. \algo~substantially outperforms this widely used baseline.

\begin{figure}[htbp]
  \centering
\begin{subfigure}[b]{0.48\textwidth}
    \includegraphics[width=0.98\textwidth]{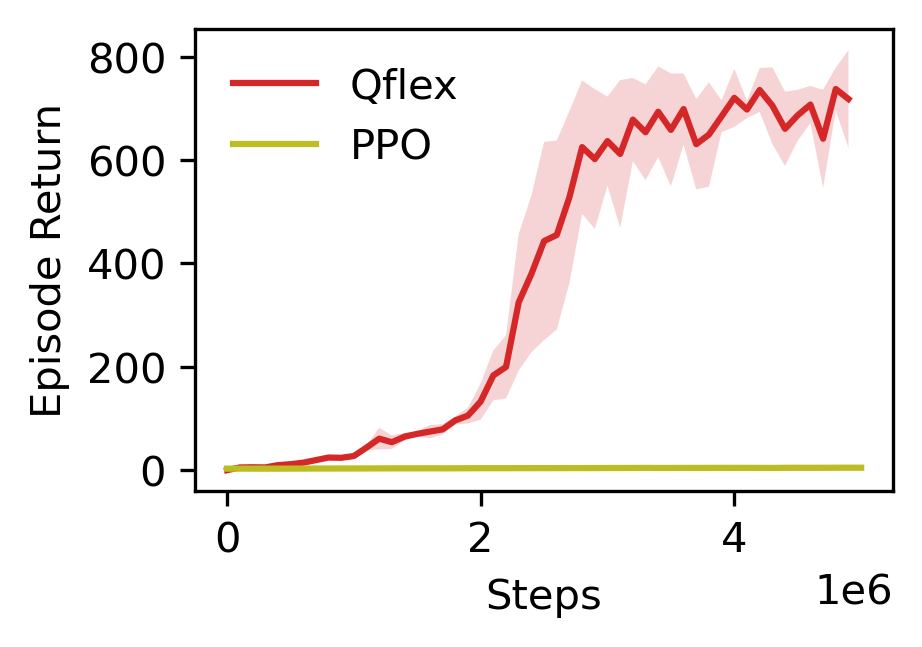}
    \caption{Unitree H1-Run}
    \label{fig:run}
\end{subfigure}
\begin{subfigure}[b]{0.48\textwidth}
    \includegraphics[width=0.98\textwidth]{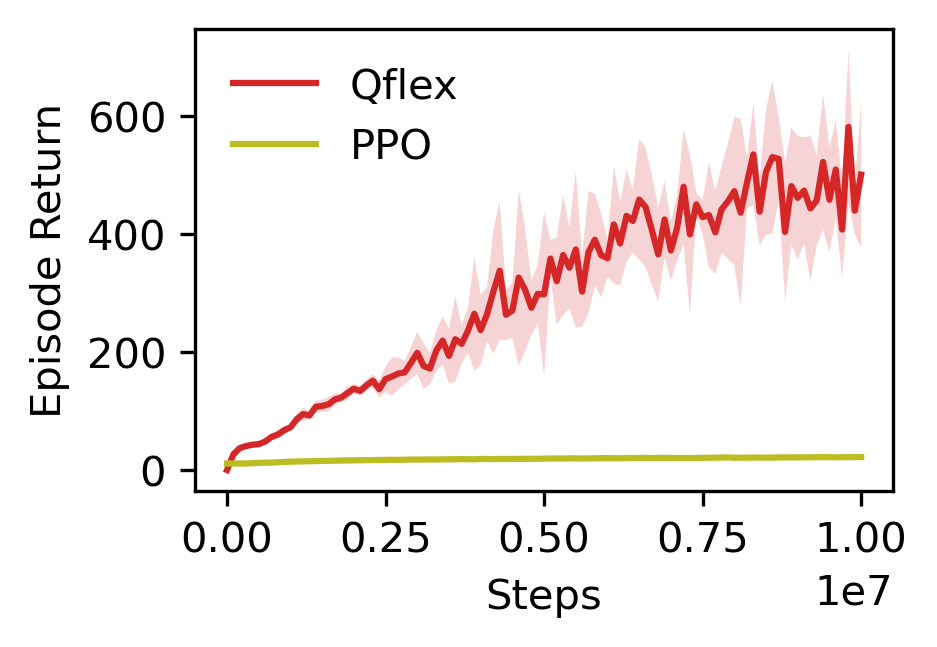}
    \caption{Unitree H1-Balance}
    \label{fig:balance}
\end{subfigure}
\caption{Learning curve of of \algo~and PPO on Unitree H1 tasks}
 \label{fig:ppo}
\end{figure}

\textbf{Comparison with flow-based online RL.} We compare \algo~with FlowRL \citep{lv2025flow} on the Unitree H1–Balance task, which is also evaluated in the FlowRL paper. Since the official FlowRL implementation\footnote{\url{https://github.com/bytedance/FlowRL}} supports only single-environment training, we run all algorithms in a single environment and align network architectures and training hyperparameters. We successfully reproduce the FlowRL performance reported in the original paper, and \algo~substantially outperforms this baseline, highlighting its systematic advantages over FlowRL in high-dimensional continuous control (see Figure \ref{fig:single_env}).

\textbf{Comparison with intrinsic motivation-based RL.} We additionally compare \algo~against MaxInfoRL \citep{sukhija2024maxinforl}, an intrinsic-motivation method that promotes exploration via estimated information gain and includes evaluations on Unitree-H1 robots. We refer to the official implementation\footnote{\url{https://github.com/sukhijab/maxinforl_jax}} and use the MaxInfoSAC variant, which is the primary version evaluated in the original paper. We observe \algo~significantly outperforms MaxInfoRL on the Unitree H1–Balance task (see Figure \ref{fig:single_env}). We consider intrinsic motivation-based RL methods encourage exploration by modifying the learning objective, but they do not directly address the challenge of inefficient sampling in high-dimensional continuous control.

\begin{figure*}[hbtp]
  \centering
  \includegraphics[width=.5\linewidth]{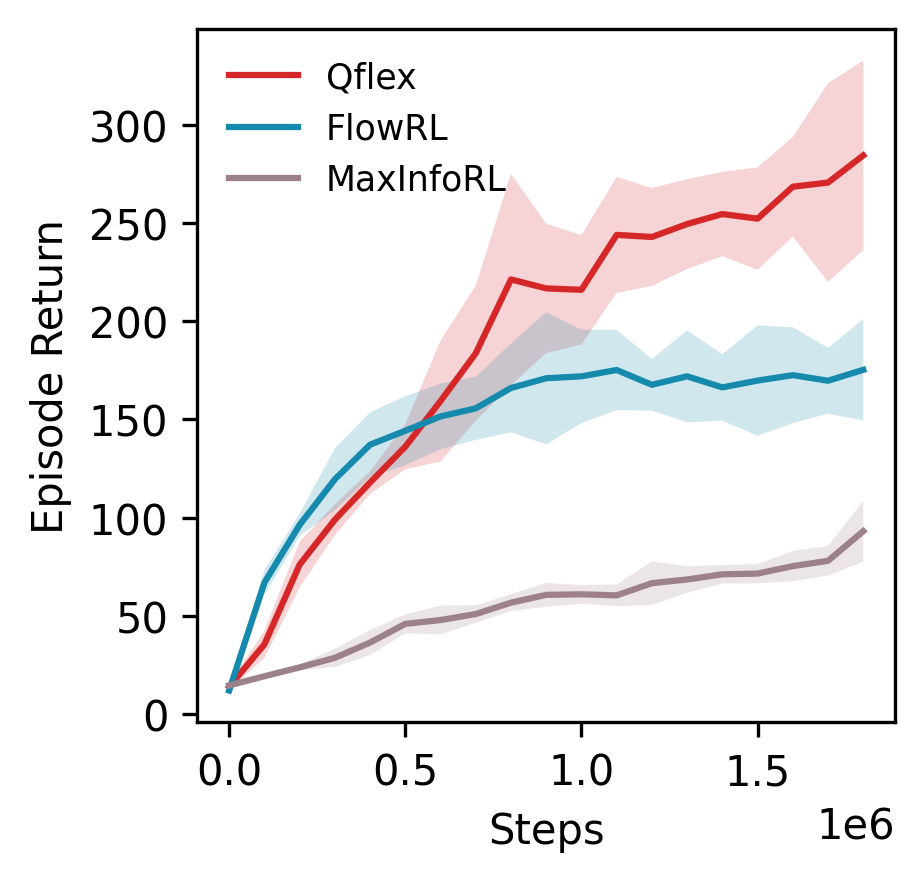}
  \caption{Learning curve of algorithms on Unitree H1-Balance task. Algorithms are trained on single environment. Results show mean performances with one standard deviation of 5 independent runs.}
  \label{fig:single_env}
\end{figure*}

\textbf{Ablation over exploration strategy.} On the MS-Human-700–Walk task, we directly ablate the flow-based exploration strategy against a Gaussian-based alternative. \algo~significantly outperforms the Gaussian-exploration variant, demonstrating the systematic advantage of flow-based exploration in high-dimensional continuous control (see Figure \ref{fig:ablation_gaussian}).

\textbf{Energy efficiency.} In the MyoLeg–Walk and MS-Human-700–Walk tasks, we compare the total muscle activation of \algo~with CrossQ (the strongest Gaussian-based policy). \algo~achieves substantially lower total muscle activation, demonstrating the superior energy efficiency enabled by flow-based exploration (see Table \ref{tab:energy}). 

\begin{figure*}[hbtp]
  \centering
  \includegraphics[width=.5\linewidth]{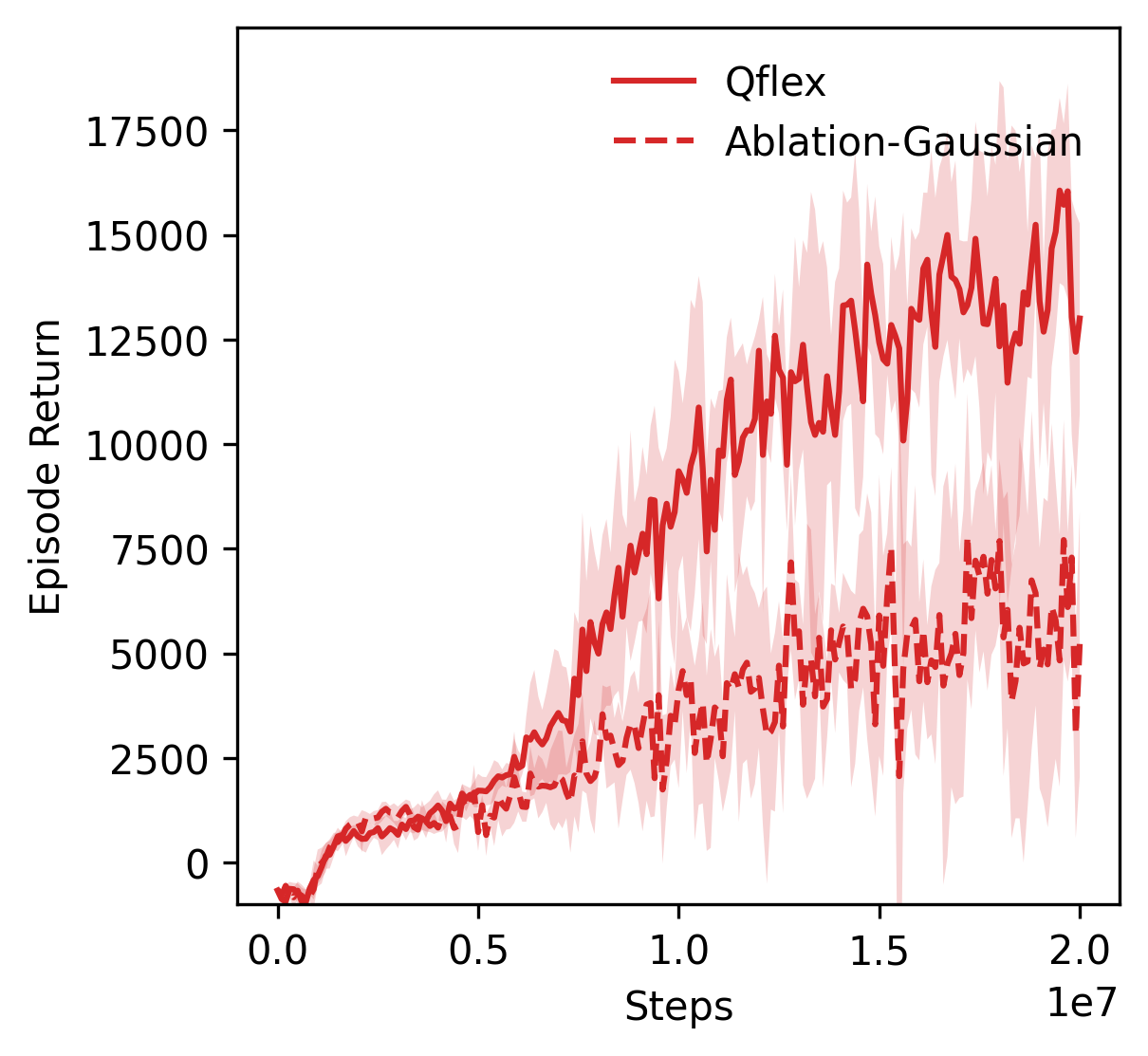}
  \caption{Ablation over exploration strategy on MS-Human-700-Walk task. Results show mean performances with one standard deviation of 5 independent runs.}
  \label{fig:ablation_gaussian}
\end{figure*}

\begin{table}
    \centering
    \begin{tabular}{ccc}
    \toprule
        Method & MyoLeg-Walk (80 actuators) & MS-Human-700-Walk (700 actuators)\\
         \midrule
        \algo & $\boldsymbol{34.26 \pm 2.84}$ & $\boldsymbol{307.51 \pm 20.37}$\\
        CrossQ & $38.47 \pm 3.49$ & $356.87 \pm 22.57$ \\
        \bottomrule
    \end{tabular}
    \caption{Energy efficiency measured by total actuator activation (lower is better). Results shows mean performances with one standard deviation.}
    \label{tab:energy}
\end{table}

\textbf{Runtime analysis.} On the MyoLeg–Walk task, we compare the runtime of \algo~with Gaussian-based and diffusion-based baselines on an NVIDIA GeForce RTX 4090 D GPU. \algo~achieves comparable or lower runtime relative to all evaluated diffusion-based methods (see Table \ref{tab:runtime},). Although its runtime is higher than that of Gaussian-based baselines, this overhead is acceptable given the substantial performance gains and remains well within real-time control requirements.


\begin{table}
    \centering
    \begin{tabular}{cccccc}
    \toprule
       Method & Training time & Per-step deployment time\\
        \midrule
        \algo~& $52.94 \pm 0.29$min & $0.49\pm0.19$ms \\
        SDAC & $82.26 \pm 0.71$min & $4.07\pm 0.58$ms \\
        DACER & $75.06 \pm 0.08$min & $0.54\pm0.28$ms \\
        QSM & $48.36 \pm 0.24$min & $2.95\pm0.39$ms \\
        SAC & $29.4 \pm 0.12$min & $0.11\pm0.05$ms \\
        CrossQ & $34.83 \pm 0.04$min & $0.13\pm0.05$ms \\
        
        \bottomrule
    \end{tabular}
    \caption{Runtime comparison of algorithms on MyoLeg-Walk task. Results shows mean performances with one standard error.}
    \label{tab:runtime}
\end{table}

\textbf{Correlation of \algo~exploration.} We conduct the exploration analysis on the MyoLeg–Walk task. Because the flow-based distribution is difficult to visualize directly, we approximate \algo’s exploration noise by computing the standard deviation of 1,000 sampled actions at each timestep. \algo~exhibits structured correlations across action dimensions, in contrast to isotropic Gaussian noise (see Figure \ref{fig:correlation}). We further observe strong correlations among actuators within the same anatomical groups, for example, the gluteus maximus (glmax1, glmax2, glmax3) and the gastrocnemius (gaslat, gasmed). These patterns provide evidence that \algo~performs directed exploration informed by both task structure and system morphology.

\begin{figure*}[btp]
  \centering
  \includegraphics[width=1\linewidth]{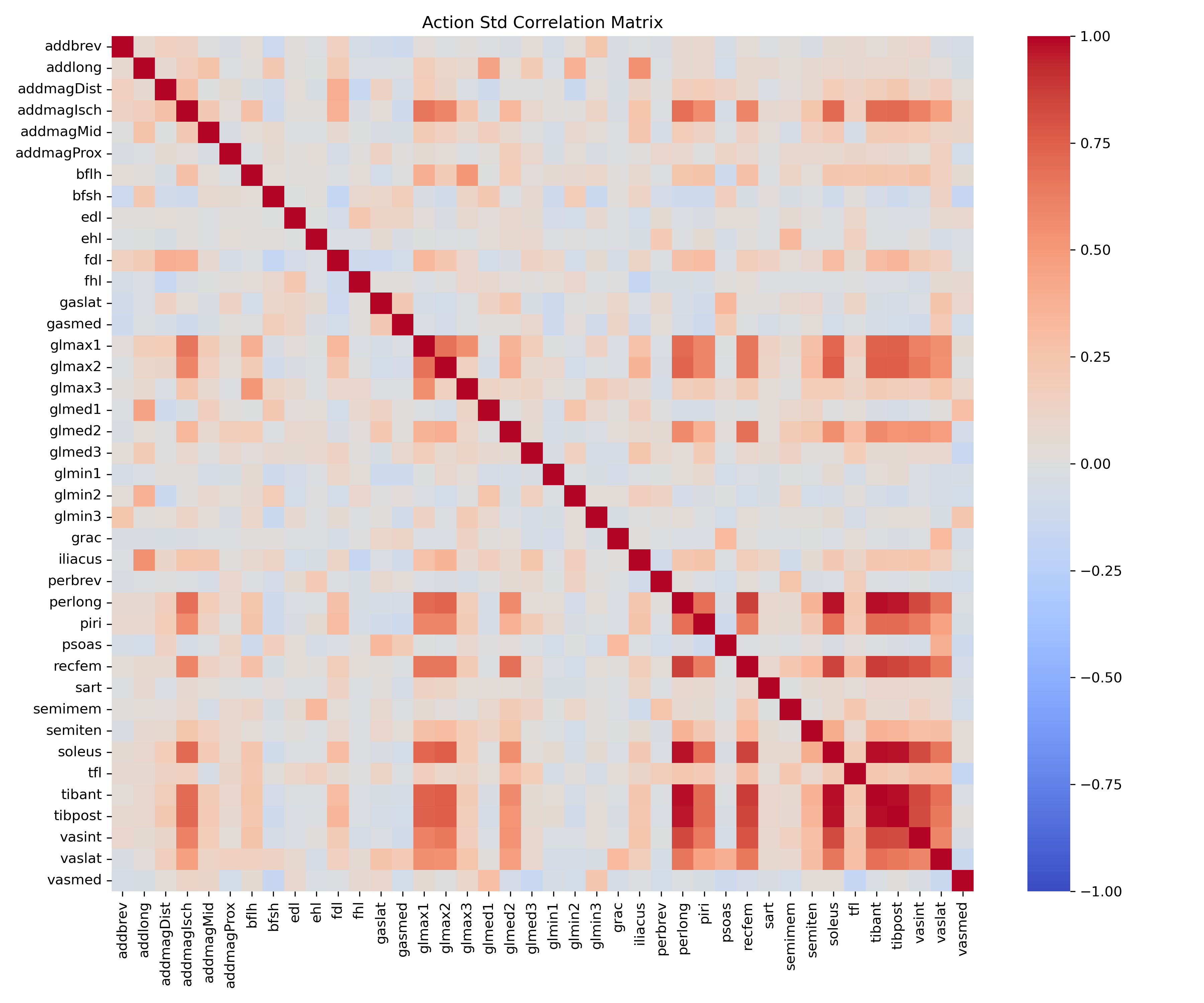}
  \caption{Correlation matrix of \algo~exploration over right lowerbody muscles in MyoLeg-Walk. }
  \label{fig:correlation}
\end{figure*}

\end{document}